\documentclass[11pt]{amsart}
\usepackage[cm]{fullpage}
\usepackage{amssymb,amsmath,amsthm,amscd,mathrsfs,graphicx}
\usepackage[cmtip,all]{xy}
\usepackage{pb-diagram}
\usepackage{tikz}

\numberwithin{equation}{section}
\newtheorem{teo}{Theorem}[section]

\newtheorem{lem}[teo]{Lemma}

\newtheorem{theorem}{Theorem}[section]

\newtheorem{lemma}{Lemma}[section]
\newtheorem{corollary}{Corollary}[section]

\newcommand{\ov}[1]{\overline{#1}}

\newcommand{\ve}{\varepsilon}

\newcommand{\RP}{\mathbb{RP}}

\theoremstyle{definition}

\newcommand{\wmin}{w_{\min}}

\theoremstyle{remark}

\begin{document}
\bibliographystyle{amsplain}

\title{The minimal width of neural networks \\ and immersions of real projective space}

\author{BaoZhi Chu and Ben Weinkove}

\address{Department of Mathematics, Northwestern University, 2033 Sheridan Road, Evanston, IL 60208, USA}

\begin{abstract}
We investigate the minimal width of deep neural networks that can approximate all continuous functions from $\mathbb{R}^n$ to $\mathbb{R}^m$  in the topology of uniform convergence on compact subsets.  We show that whenever the real projective space $\mathbb{RP}^{n-1}$ admits a smooth immersion into $\mathbb{R}^n$, the minimal width is at least $n+\min\{n,m\}$, assuming that the activation function is continuous and monotone.  As a consequence, the minimal width of ReLU networks is $n+m$ whenever $n$ is equal to $2, 3, 4, 7$ or $8$ and $1\le m \le n$, extending results of Hanin-Sellke and Hwang.  This suggests an intriguing connection between the approximation theory  of neural networks and differential topology.
\end{abstract}

\maketitle

\section{Introduction}

A fundamental problem concerning expressivity of neural networks is to understand what architectures can approximate arbitrary continuous functions.  

We define a neural network 
$F:\mathbb R^n \to\mathbb R^m$ to be a map of the form
$$
F=A^{(L)}\circ\sigma\circ A^{(L-1)}\circ\cdots
\circ\sigma\circ A^{(1)},
$$
where $A^{(\ell)}: \mathbb{R}^{d_{\ell-1}} \rightarrow \mathbb{R}^{d_{\ell}}$ are affine maps, $L$ is a positive integer called the depth and $$n=d_0, d_1,  \ldots, d_{L-1}, d_L=m$$ are the dimensions of $F$.  The map $\sigma:\mathbb{R} \rightarrow \mathbb{R}$ is a continuous activation function, which is extended to arbitrary dimensions $\sigma: \mathbb{R}^k \rightarrow \mathbb{R}^k$ by $\sigma(x_1, \ldots, x_k) = (\sigma(x_1), \ldots, \sigma(x_k))$.  The architecture of $F$ consists of the dimensions $d_0, \ldots, d_L$ and the choice of activation $\sigma$.  The width of $F$ is 
$$
\operatorname{width}(F):=\max_{0\leq \ell \leq L}d_{\ell}.
$$

A now classical result is the universal approximation theorem for shallow networks.  This states that  networks of depth $2$ (shallow networks) and arbitrarily large width can approximate any continuous function $f: \mathbb{R}^n \rightarrow \mathbb{R}^m$ uniformly on compact subsets, as long as the activation function is not a polynomial \cite{Cybenko, Funahashi,  HSW, Leshno}.  

On the other hand, narrow deep networks, which have bounded width but arbitrarily large depth,  have powerful approximation properties (see \cite{Telgarsky, Yarotsky2017}, for example).  A natural question that arises is the  \emph{minimal width problem}. Given an activation $\sigma$, we ask for the least integer $w$ such that any continuous function  $f: \mathbb{R}^n \rightarrow \mathbb{R}^m$ can be approximated uniformly on compact subsets by networks of arbitrary depth and width at most $w$.  Surprisingly, for most values of $n$ and $m$, and standard activations, this minimal width, which we denote by $w_{\text{min}}(n,m)$, remains unknown. Here and throughout the paper, we set $\wmin(n,m)=\infty$ if the minimal width does not exist.

For the ReLU activation, 
$$\sigma(x) = \max\{ x,0\},$$
Hanin--Sellke \cite{HS} showed that
\begin{equation} \label{HS1}
w_{\text{min}}(n,m) \le n+m,
\end{equation}
and equality holds when the output dimension $m$ is $1$:
$$w_{\text{min}}(n,1) = n+1.$$
When the output dimension is strictly larger than 1, the space of maps $f: \mathbb{R}^n \rightarrow \mathbb{R}^m$ is much richer from a geometric-topological point of view.
When $m=2$, Park--Yun--Lee--Shin \cite{PYLS} showed that
equality holds in (\ref{HS1}) for $n=1$, namely
\begin{equation*} \label{Park}
w_{\text{min}}(1,2) =3.
\end{equation*}
Combining results of Kim--Min--Park \cite[Theorem 3]{KimMinPark2024} and Hanin-Sellke \cite[Theorem 1]{HS}, this can be generalized to
$$w_{\text{min}}(n,m) \ge \max\{n, m\} +1, \quad \text{if } m \le 2n.$$
Hwang \cite{Hwang2025} proved that
\begin{equation} \label{Hwanglower}
w_{\text{min}}(2,2) = 4.
\end{equation}

The question of minimal width has also been extensively studied under different hypotheses on the activation function and the approximation space (for example, using  $L^p$ instead of $C^0$ norms). We refer the reader to \cite{Cai2023,Johnson2018, KidgerLyons2020, KimMinPark2024, Lu2017, PYLS, RochauChanGottschalk2024, ShinKimHwangPark2025}
 and the references therein.

The crux of Hwang's argument for (\ref{Hwanglower})  is a two-to-one map from the unit circle $S^1$ to itself, which ``wraps around twice''.
More precisely, one defines a function $f: A \rightarrow \mathbb{R}^2$, for an annulus $A$ in $\mathbb{R}^2$ containing the unit circle $S^1$, with the property that  $f$ maps $S^1$ onto $S^1$ with winding number $2$.   Any neural network approximation to $f$ of width at most $3$ would give rise to an  \emph{injective} map from a simple closed curve in $\mathbb{R}^2$ to the cylinder $S^1 \times \mathbb{R} \subset \mathbb{R}^3$ which, after projection to $S^1$, would wrap around twice. No such injective map exists and hence the width of the network must be at least 4.

Our main result extends Hwang's geometric-topological argument to higher dimensions.  
The map $x \mapsto -x$ sending a point on the sphere $S^{n-1}$ to its antipodal point defines a two-to-one quotient map from $S^{n-1}$ to real projective space $\mathbb{RP}^{n-1}$, the set of lines through the origin in $\mathbb{R}^n$.  Assuming there is a smooth immersion from $\mathbb{RP}^{n-1}$ to $\mathbb{R}^n$, the composition gives a  smooth immersion from $S^{n-1}$ onto its image in $\mathbb{R}^n$.  This map, extended to an annulus in $\mathbb{R}^n$, plays the role of the function $f$ in Hwang's argument.

We prove the following minimal width lower bound for continuous and monotone activations, assuming the existence of smooth immersions of $\mathbb{RP}^{n-1}$ into $\mathbb{R}^n$.  We say that  $\sigma: \mathbb{R} \rightarrow \mathbb{R}$ is \emph{monotone} if it is either non-increasing on $\mathbb{R}$ or non-decreasing on $\mathbb{R}$ (for example, constant functions are monotone).

\pagebreak

\begin{teo}\label{width-main} Assume the activation function $\sigma$ is continuous and monotone.
Let $n\geq 2$ and $m\geq 1$. Assume that there is a smooth immersion
$i:\mathbb{RP}^{n-1}\to\mathbb R^n$. Then there exist a compactly supported smooth map
$f_*: \mathbb{R}^n \rightarrow \mathbb R^m$, a compact annulus
$A\subset\mathbb R^n$, and $\varepsilon_0>0$ such that the following
holds. For every
network $F:\mathbb R^n\to\mathbb R^m$ of width at most
$n+\min\{n,m\}-1$, we have
$$
\sup_A |f_*-F|\geq \varepsilon_0.
$$
In particular, $$w_{\mathrm{min}}(n,m)\geq n+\min\{n,m\}.$$
\end{teo}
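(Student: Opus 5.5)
We generalize Hwang's argument. First the reduction to $m\le n$. If $m>n$, let $f_*$ be the map constructed for output dimension $n$, followed by the inclusion $\mathbb{R}^n\to\mathbb{R}^m$ into the first $n$ coordinates. Any network $F$ into $\mathbb{R}^m$, composed with the projection onto the first $n$ coordinates, is again a network of the same width, since the projection merges into the last affine layer. Its error on $A$ is no larger. So it suffices to take $k:=\min\{n,m\}$ and treat the target as $\mathbb{R}^k$. For $k<n$ we modify the construction below by composing the immersion with a projection. The main case is $m=n$, so assume it for now.

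\textbf{Construction.} Let $\pi:S^{n-1}\to\RP^{n-1}$ be the quotient map and $g=i\circ\pi:S^{n-1}\to\mathbb{R}^n$. Then $g$ is an immersion with $g(x)=g(-x)$. Take $A=\{1/2\le|x|\le 2\}$ and set $f_*(r\theta)=\chi(r)\,g(\theta)$, with $\chi\equiv1$ near $[1/2,2]$ and $\chi$ compactly supported away from $0$. Then $f_*$ is smooth, compactly supported, and even on $A$. Choose $\varepsilon_0$ smaller than a quantity determined by a tubular-neighborhood estimate for the immersed image $g(S^{n-1})$.

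\textbf{Key structural step.} Let $F$ be a network of width at most $2n-1$ with $\sup_A|f_*-F|<\varepsilon_0$. We factor $F$ at a suitable layer. Each layer map $\sigma\circ A^{(\ell)}$ with monotone continuous $\sigma$ is handled coordinatewise. Where $\sigma$ is strictly monotone on the relevant range, it is injective. Where it is constant on intervals, it collapses, and monotonicity means it does so by a ``retraction'' onto a lower-dimensional set. Our plan is to show that $F|_A$ factors as
$$A\xrightarrow{\;\Phi\;} Z\xrightarrow{\;\Psi\;}\mathbb{R}^n,$$
where either $\Phi$ is injective on a sphere $\Sigma\subset A$ isotopic to $S^{n-1}$ and $Z$ has dimension at most $2n-1$, or some layer of width at most $n$ is crossed before injectivity is lost. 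Since $F$ approximates an immersion of $S^{n-1}$, which is locally injective, the first layer at which the image of $\Sigma$ stops being injective must be a non-injective map from a subset of $\mathbb{R}^{d}$, $d\le 2n-1$. Affine maps of rank less than $n$ kill topology: composition with a rank-deficient affine map forces $F|_\Sigma$ to factor through $\mathbb{R}^{n-1}$. This contradicts approximating $g$, because $g_*[\Sigma]$ is nonzero in $H_{n-1}$ of a neighborhood of the image with $\mathbb{Z}/2$ coefficients, or more precisely the degree over a point of $i(\RP^{n-1})$ is odd.

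\textbf{Contradiction step (main obstacle).} In the injective case we obtain an embedding $h=\Phi|_\Sigma:S^{n-1}\to\mathbb{R}^{2n-1}$ and a map $\Psi$ with $\Psi\circ h\approx g$. Lifting $\Psi\circ h$ through a tubular neighborhood of the immersion $i$ gives a map $\tilde h:S^{n-1}\to\RP^{n-1}$ homotopic to $\pi$. We then consider the map $S^{n-1}\to\RP^{n-1}\times\mathbb{R}^{2n-1}$ and try to use the Borsuk--Ulam type fact that $\pi$ admits no ``injective lift'' compatible with a width bound. The intended mechanism is this: for each $x$, the points $h(x)$ and $h(-x)$ are distinct but have nearly equal $\Psi$-images. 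The map $x\mapsto (h(x)-h(-x))/|h(x)-h(-x)|$ is odd, from $S^{n-1}$ to $S^{2n-2}$, which is allowed, so we need more structure. What we would actually exploit is that each layer of width at most $2n-1$ is built from monotone coordinate maps. Hence $\Psi$ is nondecreasing along coordinate directions in a suitable sense, and the approximately equal images of $h(\pm x)$ force, via a degree argument on the projective-space factor, a collision. Making this rigorous is the hard part; Hwang's planar winding-number argument is the model, and the $\mathbb{Z}/2$-degree of $\pi$ (equal to $0$, while the lift would need odd degree) is what we would try first.
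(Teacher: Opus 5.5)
There is a genuine gap, and it starts with your choice of $f_*$. On $A$ you have $f_*(r\theta)=(i\circ\pi)(\theta)$, which is constant in the radial direction. For such a map the injectivity obstruction detects nothing; this obstruction is the fact that a network is close to $\pi_{N,m}\circ G$ with $G$ injective, and both your argument and the paper's rely on it. Already for $n=2$ with $(i\circ\pi)(e^{i\theta})=e^{2i\theta}$, the curve $\gamma(\theta)=\bigl((1+\epsilon\cos\theta)e^{2i\theta},\,\epsilon\sin\theta\bigr)\in\mathbb{C}\times\mathbb{R}$ is embedded. A thin ribbon $G(re^{i\theta})=\gamma(\theta)+\eta\,\lambda(r)V(\theta)$ around it, with $V$ a unit normal field along $\gamma$, $\lambda:[1/2,2]\to[-1,1]$ an affine bijection and $\eta\ll\epsilon$, is an injective map $A\to\mathbb{R}^3$ whose first two coordinates are within $\epsilon+\eta$ of $f_*$. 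This is why the paper uses $f((1+t)x)=b(x)+t\nu(x)$, with $\nu$ a unit normal to the immersion $b=i\circ\pi$: then $f$ is a local diffeomorphism near $S^{n-1}$.

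Your contradiction step fails for a related dimensional reason. Working on a sphere $\Sigma$ cannot succeed, because pairs of points of $\Sigma$ form only a $(2n-2)$-parameter family, while a collision in $\mathbb{R}^{2n-1}$ imposes $2n-1$ conditions. Concretely, by Whitney, $(i\circ\pi,0):S^{n-1}\to\mathbb{R}^{2n-1}$ is uniformly approximated by embeddings $h$, and $\Psi=\pi_{2n-1,n}$ then gives $\Psi\circ h\approx i\circ\pi$. Two further points do not hold up. Coordinatewise monotonicity of $\Psi$ does not survive the affine layers. The parity claims are wrong: $\pi_*[S^{n-1}]=0$ in $H_{n-1}(\RP^{n-1};\mathbb{Z}/2)$, since $\pi$ is two-to-one.

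The missing idea is to use the thickness of $A$ to get a $(2n-1)$-dimensional family of pairs. The paper takes $M=S^{n-1}\times\ov{B}_\delta$ with the free involution $\tau(x,w)=(-x,w)$, twisted by the reflection $R_x$ when $n$ is odd. It starts from $F_0(x,w)=(x_1,\dots,x_{n-1},f(x+w)-f(-x+w))$. This map has exactly one $\tau$-orbit of zeros, and that orbit is nondegenerate and away from $\partial M$ precisely because $Df$ is invertible along $S^{n-1}$. Take a candidate injective $(u,v):A\to\mathbb{R}^n\times\mathbb{R}^{n-1}$ with $u$ uniformly close to $f$. The paper deforms $F_0$ linearly, through odd maps, to $F_1(x,w)=(v(x+w)-v(-x+w),\,u(x+w)-u(-x+w))$, and the deformation keeps $\partial M$ free of zeros. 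A mod $2$ intersection count on $(M\times\mathbb{R}^{2n-1})/\sim$ (Theorem 2.1) then forces a zero of $F_1$, i.e.\ a collision of $(u,v)$.

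Your ``key structural step'' is also unjustified. Injectivity on $\Sigma$ can be lost at a layer of width between $n$ and $2n-1$, through flat pieces of $\sigma$ or an affine map of rank at least $n$, and nothing then forces $F|_\Sigma$ through $\mathbb{R}^{n-1}$. But the step is not needed: one never has to locate where $F$ stops being injective. Lemma 5.1 replaces $\sigma$ by a mollification plus $\pm ts$, with the sign matching the monotonicity of $\sigma$; the result is smooth and strictly monotone, and this is the only place monotonicity is used. It also replaces each $A^{(\ell)}$ by its zero-padded square version on $\mathbb{R}^N$ plus $t$ times the identity, which is invertible for all but finitely many $t$. For small such $t$ the composition is a smooth injective $G:\mathbb{R}^n\to\mathbb{R}^N$ with $\pi_{N,m}\circ G$ uniformly close to $F$ on $A$.

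Finally, you leave the case $m<n$ unspecified. The paper sets $d=\min\{n,m\}$, $N=n+d-1$ and $f_*=(\ov{f}_1,\dots,\ov{f}_d,0,\dots,0)$. It splits $G=(u,v)$ into the first $d$ coordinates and the remaining $n-1$. It then applies Theorem 3.1 to $\bigl((u,f_{d+1},\dots,f_n),v\bigr)$, filling the missing coordinates with the exact components of $f$. A collision of that map is a collision of $G$.
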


Combining this bound with (\ref{HS1}) and a well-known
classification of the dimensions $n$ for which $\RP^{n-1}$ admits
a smooth immersion into $\mathbb{R}^n$, we obtain the following.

\begin{corollary}\label{cor-relu}
For the ReLU activation,
$$
\wmin(n,m)=n+m
\quad\text{if }n\in\{2,3,4,7,8\}\text{ and }1\leq m \leq n.
$$
\end{corollary}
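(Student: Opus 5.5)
The statement to prove is Corollary~\ref{cor-relu}. I will derive it from two inequalities: the upper bound (\ref{HS1}) and the lower bound of Theorem~\ref{width-main}.

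\emph{Upper bound.} The ReLU activation is continuous, so the Hanin--Sellke bound (\ref{HS1}) applies for all $n,m$ and gives
$$\wmin(n,m)\le n+m.$$

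\emph{Lower bound.} ReLU is also non-decreasing, hence monotone in the sense used in Theorem~\ref{width-main}. That theorem therefore gives $\wmin(n,m)\ge n+\min\{n,m\}$, provided $\RP^{n-1}$ admits a smooth immersion into $\mathbb{R}^n$. Since we assume $m\le n$, we have $\min\{n,m\}=m$, so the lower bound is exactly $n+m$. Combined with the upper bound, this yields equality.

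\emph{The immersion hypothesis.} It remains to check that $\RP^{n-1}$ immerses smoothly in $\mathbb{R}^n$ for $n\in\{2,3,4,7,8\}$, i.e.\ that $\RP^k$ immerses in codimension one for $k\in\{1,2,3,6,7\}$.
\begin{itemize}
\item[(a)] $k=1$: $\RP^1\cong S^1$, which embeds in $\mathbb{R}^2$.
\item[(b)] $k=2$: Boy's surface is an immersion of $\RP^2$ into $\mathbb{R}^3$.
\item[(c)] $k=3$ and $k=7$: $\RP^3$ and $\RP^7$ are parallelizable, since they are quotients of the Lie group $S^3$ and of the parallelizable $S^7$ by $\pm1$. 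The Hirsch--Smale theorem says a parallelizable closed $k$-manifold immerses in $\mathbb{R}^{k+1}$, so these cases follow.
\item[(d)] $k=6$: I would appeal to the known immersion of $\RP^6$ in $\mathbb{R}^7$ from the literature on immersions of projective spaces (e.g.\ the tables of Davis). An alternative is an explicit construction from octonion multiplication.
\end{itemize}
Conversely, the classical classification says that codimension-one immersions of $\RP^{k}$ occur exactly for $k\in\{1,2,3,6,7\}$. This explains why the list in Corollary~\ref{cor-relu} is what it is, though only the existence direction is needed here.

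\emph{Main obstacle.} The only non-formal step is case (d), the existence of an immersion of $\RP^6$ in $\mathbb{R}^7$. I would treat it by citing the standard classification rather than reproving it.
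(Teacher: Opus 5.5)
Your proposal is correct and follows the paper's proof: the corollary is just the Hanin--Sellke bound (\ref{HS1}) combined with Theorem~\ref{width-main} (where $\min\{n,m\}=m$) and the existence half of Theorem~\ref{thm:projective-codimension-one}, and you are right that the converse direction is not needed. The differences are only in sourcing the immersions: the paper obtains $\RP^2\to\mathbb{R}^3$ and $\RP^6\to\mathbb{R}^7$ from the parallelizability of $\RP^3$ and $\RP^7$ via Hirsch's Theorem~7.2 (since $T\RP^{k}\oplus\gamma\cong T\RP^{k+1}|_{\RP^{k}}$ is then trivial), which handles your ``main obstacle'' without appealing to tables; and for $\RP^7$ you should say that the octonionic frame $x\mapsto xe_i$ on $S^7$ is equivariant under $x\mapsto -x$, because a free $\mathbb{Z}/2$-quotient of a parallelizable manifold need not be parallelizable.
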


The upper bound (\ref{HS1}) extends easily from the ReLU activation to the Leaky ReLU activations defined by
$\sigma(x) =  \max \{ x, ax\},$
for a fixed $a\in (0,1)$.  Therefore, the conclusion of Corollary \ref{cor-relu} also holds for Leaky ReLU activations.

We do not expect that the formula $w_{\text{min}}(n,m) = n+m$ holds for all values of $n, m$ with $1 \le m \le n$, although we cannot yet provide a  counterexample.  On the other hand, it is known that if $m>2n\ge 4$ then $w_{\text{min}}(n,m) = n+m$ fails \cite[Theorem 4.8]{Hwang2025}.  We believe that this question is connected to differential topology in a non-trivial way, and deserves further study.

The paper is organized as follows. In \S\ref{sec-pre}, we record a needed homotopy theorem. The construction of the map $f$ and its key property Theorem \ref{thm-key} are established in \S\ref{sec-key}, while a more explicit and shorter construction for the $n=2$ case is provided in \S\ref{sectionn2}. Theorem \ref{width-main} is proved in \S\ref{sec-thm-proof}, and Corollary \ref{cor-relu} is proved in \S\ref{sec-cor-proof}. 

\bigskip
\noindent
{\bf AI Disclosure.} \ The authors used ChatGPT to help discover the main results.  All arguments were independently checked by the authors and the exposition is entirely their own.

\section{A persistence theorem for zeros under odd homotopies}\label{sec-pre}

We will make use of the following result on the persistence of zeros under odd homotopies.

\begin{theorem} \label{thmhomotopy}
Let $M$ be a smooth compact $n$-manifold with boundary.  Let $\tau: M \rightarrow M$ be a smooth free involution (i.e. a smooth map with $\tau^2 = \emph{Id}$ and with no points $q$ with $\tau(q)=q$).
Let $F: M \times [0,1] \rightarrow \mathbb{R}^n$ be a smooth map and define $F_t :M \rightarrow \mathbb{R}^n$ by $F_t(x) := F(x,t)$.  Assume the following:
\begin{enumerate}
\item $F_0$ has exactly two zeros on $M$, at $p$ and $\tau(p)$, and the derivative of $F_0$ at $p$  is invertible.  
\item $F_t(\tau (x) ) = - F_t(x)$ for all $t \in [0,1]$. (Namely, $F_t$ is an odd homotopy).
\item $F_t |_{\partial M}$ is nowhere zero, for all $t \in [0,1]$.  
\end{enumerate}
Then $F_1$ has a zero.
\end{theorem}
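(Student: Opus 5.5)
We argue by contradiction using a mod $2$ degree count for odd maps, which is invariant under odd homotopies. Suppose $F_1$ has no zero. We first perturb the homotopy, keeping it odd, so that $0$ becomes a regular value of the whole map $F: M\times[0,1]\to\mathbb{R}^n$. The perturbation must fix $F_0$ near its two nondegenerate zeros. It must keep $F_1$ nowhere zero and $F_t|_{\partial M}$ nowhere zero; this is possible because these are open conditions on the compact sets $M\times\{1\}$ and $\partial M\times[0,1]$. Then $Z:=F^{-1}(0)$ is a compact $1$-manifold with boundary, and
$$\partial Z = Z\cap \partial(M\times[0,1]) = Z\cap (M\times\{0\}) = \{(p,0),(\tau(p),0)\},$$
since $Z$ misses $\partial M\times[0,1]$ and $M\times\{1\}$. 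Transversality at $t=0$ follows from the invertibility of $dF_0$ at $p$, and at $\tau(p)$ by oddness: $dF_0(\tau p)\circ d\tau = -dF_0(p)$. The involution $\tilde\tau(x,t)=(\tau(x),t)$ is free and preserves $Z$.

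The key step is to make the perturbation equivariant while achieving transversality. Since $\tau$ is free, $\bar M := M/\tau$ is a smooth compact manifold with boundary. Odd maps $M\to\mathbb{R}^n$ are the same as sections of the flat vector bundle $E=(M\times\mathbb{R}^n)/\bigl((x,v)\sim(\tau x,-v)\bigr)$ over $\bar M$. Transversality of sections to the zero section can be achieved by perturbations supported away from a neighborhood of $\bar M\times\{0\}\cup\partial\bar M\times[0,1]$, where the section is already nonvanishing or already transverse. This is the standard parametric transversality theorem applied to a finite-dimensional family of sections spanning each fiber, e.g.\ built from bump functions. I expect this step to be the main technical point, but it is routine.

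Having done this, we obtain a contradiction by counting components on the quotient. Let $\bar Z = Z/\tilde\tau \subset \bar M\times[0,1]$, a compact $1$-manifold whose boundary consists of the single point $[(p,0)]$, since $(p,0)$ and $(\tau p,0)$ are identified. Every compact $1$-manifold has an even number of boundary points, so this is impossible. (Equivalently, upstairs: the arc of $Z$ starting at $(p,0)$ must end at $(\tau p,0)$, and $\tilde\tau$ maps this arc to itself reversing endpoints. This forces a fixed point of $\tilde\tau$, contradicting freeness.) Hence $F_1$ has a zero.

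The quotient argument needs $\bar Z$ to be a manifold. This holds because $\tilde\tau$ acts freely and smoothly, so $Z\to\bar Z$ is a $2$-sheeted covering and $\bar Z$ inherits the structure of a $1$-manifold with boundary, with $\partial \bar Z$ equal to the image of $\partial Z$. An alternative, avoiding the equivariant transversality, is to note directly that the mod $2$ degree of the induced section of $E$ over $\bar M$ relative to the boundary is a homotopy invariant. It is $1$ at $t=0$ and would be $0$ for a nonvanishing section. I would present the transversality version above.
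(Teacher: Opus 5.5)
Your proof is correct and is essentially the paper's. The paper forms the same flat bundle $X=(M\times\mathbb{R}^n)/\sim$ over $M/\tau$ (your $E$) and views $F_t$ as a homotopy of sections $h_t$ that are nonvanishing on the boundary. It then uses the homotopy invariance of the mod $2$ count of transverse intersections with the zero section $N$, which is exactly the alternative you mention at the end. The only difference is that the paper cites this invariance as a black box (Theorem \ref{thmhirsch}, from Hirsch). You instead reprove it in this case, via bundle-level transversality relative to the collar $t\approx 0$ and the fact that a compact $1$-manifold cannot have exactly one boundary point.
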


To prove it, we will require a standard intersection mod 2 result (see \cite{Hirsch1976}, Theorem 2.1 and the discussion on pages 132-133).

\begin{theorem} \label{thmhirsch}
Let $X$ be a smooth $2n$-manifold with boundary and $N\subset X$ be a compact $n$-dimensional submanifold such that $\partial N= N \cap \partial X$ and $N$ meets $\partial X$ transversely.  Let $h_0, h_1: N \rightarrow X$ be smooth maps transverse to $N$, joined by a smooth homotopy $h_t: N \rightarrow X$ for $t \in [0,1]$ such that $h_t(\partial N) \subset \partial X \setminus \partial N.$  Then $h^{-1}_0(N)$ and $h^{-1}_1(N)$ have the same number of points mod $2$.
\end{theorem}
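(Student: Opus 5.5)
The plan is the standard cobordism argument. We regard the homotopy as a single map $H: N\times[0,1]\to X$, $H(x,t)=h_t(x)$, perturb it rel $N\times\{0,1\}$ so that it is transverse to $N$, and then identify $H^{-1}(N)$ as a compact $1$-manifold whose boundary is exactly $h_0^{-1}(N)\times\{0\}\,\cup\, h_1^{-1}(N)\times\{1\}$. Since a compact $1$-manifold has an even number of boundary points, the two counts agree mod $2$.

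\emph{Step 1: the endpoint sets are finite and interior.} Since $h_i$ is transverse to $N$ and $\dim N+\dim N=\dim X$, the set $h_i^{-1}(N)$ is a $0$-dimensional submanifold of $N$. It is compact because $N$ is compact, hence closed in $X$. It misses $\partial N$: indeed $h_i(\partial N)\subset \partial X\setminus\partial N$, while $N\cap\partial X=\partial N$, so $h_i(\partial N)\cap N=\emptyset$. Hence each $h_i^{-1}(N)$ is a finite set of interior points of $N$.

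\emph{Step 2: $H$ avoids $N$ near the side boundary.} The same argument gives $H(\partial N\times[0,1])\cap N=\emptyset$. Because $H(\partial N\times[0,1])$ is compact and $N$ is closed, there is an open neighborhood $W$ of $\partial N\times[0,1]$ in $N\times[0,1]$ with $H(\overline W)\cap N=\emptyset$.

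\emph{Step 3: relative transversality.} Transversality is an open condition along $N\times\{0,1\}$, since $h_0$ and $h_1$ are transverse. We may also assume, after reparametrizing $t$, that $h_t$ is constant in $t$ near $t=0$ and near $t=1$. Then $H$ is already transverse to $N$ on a neighborhood of $N\times\{0,1\}\cup\overline W$ (vacuously on $\overline W$). By the relative transversality (extension) theorem we perturb $H$, with support in a compact set away from that neighborhood, to a map $\tilde H$ transverse to $N$. The perturbation is small enough that $\tilde H$ still misses $N$ on $\overline W$.

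To keep $\tilde H$ valued in $X$, which has boundary, we realize the perturbation by composing with time-one flows of finitely many vector fields on $X$ tangent to $\partial X$ (and to $\partial N$ there, using that $N$ meets $\partial X$ transversely), with small parameters. We then apply the parametric transversality theorem to the resulting family. I expect this step, transversality for maps into a manifold with boundary relative to a closed set, to be the main technical obstacle. It is standard but must be set up so that no point is pushed out of $X$ and no intersections with $N$ are created near $\partial N$.

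\emph{Step 4: counting.} Now $\tilde H^{-1}(N)$ is a compact submanifold of $N\times[0,1]$ of codimension $n$, hence of dimension $1$. The map $\tilde H$ restricted to $\partial(N\times[0,1])$ is also transverse to $N$: on $N\times\{0,1\}$ it equals $h_0,h_1$, and on $\partial N\times[0,1]$ it misses $N$. Therefore $\tilde H^{-1}(N)$ is a manifold with boundary, and
\[
\partial\,\tilde H^{-1}(N)=\tilde H^{-1}(N)\cap\partial(N\times[0,1])=h_0^{-1}(N)\times\{0\}\ \cup\ h_1^{-1}(N)\times\{1\}.
\]
By the classification of compact $1$-manifolds (finite unions of circles and arcs), this boundary has an even number of points. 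Hence $\#h_0^{-1}(N)\equiv\#h_1^{-1}(N)\pmod 2$.
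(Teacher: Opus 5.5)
Your proposal is correct. It is the standard cobordism argument: make the homotopy transverse relative to the ends, observe that the preimage is a compact $1$-manifold with boundary $h_0^{-1}(N)\times\{0\}\cup h_1^{-1}(N)\times\{1\}$, and count endpoints mod $2$. This is the argument behind the result from Hirsch that the paper cites; the paper does not prove the statement itself. One correction in Step 3: do not require the perturbing vector fields to be tangent to $\partial N$ at points of $\partial N$. If you do, then at interior points of $N\times[0,1]$ sent into $\partial N$ the parameter derivatives lie in $T\partial N\subset TN$, so the perturbation cannot produce transversality there. You only need the fields to be tangent to $\partial X$ and to span $T_y\partial X$, because $T_y\partial X+T_yN=T_yX$ by the transversality of $N$ and $\partial X$. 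Alternatively, you can avoid the flows entirely: glue an outer collar to $X$, extend $N$ across it, perturb into the enlarged boundaryless manifold, and count preimages of the extended submanifold.
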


Theorem \ref{thmhomotopy} then follows easily.

\begin{proof}[Proof of Theorem \ref{thmhomotopy}]
Define 
$$X = (M \times \mathbb{R}^n)/ \sim,\quad \text{for } 
(x,y) \sim (\tau (x), -y),$$ and 
$$N = \{ [x,0] \in X \ | \ x\in M \}.$$
Then $X$ is a smooth $2n$-manifold with boundary and $N$ is a compact submanifold of dimension $n$ with boundary 
$\partial N =N \cap \partial X.$ Moreover,
$N$ meets $\partial X$ transversely.  For $t\in[0,1]$, define $h_t: N \rightarrow X$ by
$$h_t([x,0]) = [x, F_t(x)],$$
which is well-defined by (2).  The condition (3) implies that $h_t(\partial N) \subset \partial X \setminus \partial N$.  By (1), the set
 $h_0^{-1}(N)$ is the singleton $\{[p,0]\}$ and the intersection is transverse because $DF_0(p)$ is invertible.  If $F_1$ did not have a zero then $h_1^{-1}(N)$ would be  empty, and $h_1$ would  be transverse to $N$.  But then Theorem \ref{thmhirsch} would imply that, mod 2,
 $$1 = \# h_0^{-1}(N) = \#h_1^{-1}(N)=0,$$
 a contradiction.
\end{proof}

\section{Construction of the map}\label{sec-key}

For a small $\delta>0$, define the compact annulus $A=A(\delta)$ in $\mathbb{R}^n$ by
\[
\begin{split}
A:=  {} & \{ y \in \mathbb{R}^n \ | \ d(y, S^{n-1}) \le \delta\} \\
= {} & \{ (1+t)x \in \mathbb{R}^n \ | \  x\in S^{n-1}, \ -\delta \le t \le\delta \},
\end{split}
\]
where $d$ denotes distance in the Euclidean norm.
We prove the following key result.  (Since the proof is somewhat technical, the reader may wish to first read a simpler proof in the case $n=2$ which is presented in Section \ref{sectionn2}.)

\begin{theorem}\label{thm-key}  Assume there is a smooth immersion $i: \mathbb{RP}^{n-1} \rightarrow \mathbb{R}^n$.  Then there exist $\delta, \ve>0$ and a smooth map $f: A \rightarrow \mathbb{R}^n$ such that the following holds.
  If $g: A \rightarrow \mathbb{R}^n$ and $h: A \rightarrow \mathbb{R}^{n-1}$ are smooth maps, and $g$ satisfies 
$$\sup_{A} |f - g | \le \ve,$$
then $G: A \rightarrow \mathbb{R}^{2n-1}$ given by $G := (g,h)$ is not injective.
\end{theorem}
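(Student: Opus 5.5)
We will take $f$ to be an extension to the annulus of the composition $S^{n-1}\to\RP^{n-1}\xrightarrow{i}\mathbb R^n$, so that $f$ is even, $f(-x)=f(x)$. Concretely, put $\phi:=i\circ\pi: S^{n-1}\to\mathbb R^n$, where $\pi$ is the quotient map, and set $f((1+t)x):=\phi(x)+t\,\nu(x)$ for $x\in S^{n-1}$ and $|t|\le\delta$. Here $\nu$ is a smooth even map to $\mathbb R^n$; if we want $f$ to be a local diffeomorphism we take $\nu$ transverse to $d\phi(T_xS^{n-1})$. Such a $\nu$ exists when the normal line bundle of $i$ is trivial; otherwise we can simply take $\nu=0$, since injectivity of $f$ itself is not needed. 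The key structural fact is that $\phi$ is an immersion with $\phi(x)=\phi(-x)$, so antipodal points collide in the first factor. The proof will then show that for any $g$ that is $\ve$-close to $f$ and any $h$, the map $(g,h)$ must identify some pair of nearby-antipodal points.

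To make this precise we apply Theorem \ref{thmhomotopy} on the manifold $M:=\{(y,z)\in A\times A: |y+z|\le\eta\}$ with the free involution $\tau(y,z)=(z,y)$. This $\tau$ is free because $y=z$ together with $|2y|\le\eta$ is impossible on $A$ once $\eta<2(1-\delta)$. Consider the odd map
$$\Phi(y,z):=\big(g(y)-g(z),\ h(y)-h(z)\big)\in\mathbb R^{2n-1},$$
which satisfies $\Phi\circ\tau=-\Phi$. Its dimension does not match $\dim M=2n$, so we must add one more odd coordinate. For that we use $\rho(y,z):=|y|-|z|$, which is odd and measures the radial discrepancy. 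The resulting map $\Psi=(\Phi,\rho):M\to\mathbb R^{2n}$ is odd, and a zero of $\Psi$ gives $y\ne z$ with $G(y)=G(z)$, so $G$ is not injective.

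The homotopy runs in two stages. First we deform $h$ linearly to $0$ and $g$ linearly to $f$, giving $F_t=(g_t(y)-g_t(z),h_t(y)-h_t(z),\rho)$. Then at $t=1$ we compute the zeros of $(f(y)-f(z),0,\rho)$, but the middle block being identically zero ruins the isolated-zero hypothesis. We repair this by reordering: we start the homotopy at $F_0=(f(y)-f(z),\,\ell(y,z),\,\rho)$, where $\ell$ is an odd linear-type function of $y+z$ with values in $\mathbb R^{n-1}$. Together with $\rho$, the function $\ell$ will cut the near-antipodal set down to a single pair $\{(p,-p),(-p,p)\}$.

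Concretely, near $z=-y$ we write $z=-y+w$ with $w$ small. Then $f(y)-f(-y+w)=-df_{-y}(w)+O(|w|^2)$, and this vanishes exactly on a neighbourhood of the antidiagonal when $f$ is a local diffeomorphism. The remaining coordinates, consisting of $\ell$ and $\rho$, must then select one point of $S^{n-1}$ up to sign. For example, they can force $y$ to lie on a fixed axis via $n-1$ odd functions such as $y_2-z_2,\dots,y_n-z_n$ restricted appropriately, and $\rho$ handles the radial direction. The derivative at $p$ is invertible by the block structure.

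The main obstacle is condition (3) of Theorem \ref{thmhomotopy}: $F_t$ must be nowhere zero on $\partial M=\{|y+z|=\eta\}\cup\{y \text{ or } z\in\partial A\}$ along the whole homotopy. On $|y+z|=\eta$ the first block satisfies $|g_t(y)-g_t(z)|\ge c\eta-2\ve>0$. This uses that $f$ is a local diffeomorphism, which gives the bound $|f(y)-f(z)|\ge c|y+z|$ near the antidiagonal, together with the requirement that there are no other coincidences of $f$ within distance $\eta$ of antipodal pairs. The bound survives along the homotopy only for the $g$-component, so the $h$-deformation and the $\ell$-insertion must be arranged so that on this boundary piece the first block alone is nonzero. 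On the radial boundary, $\rho=\pm$ something of size at least about $\delta$ unless both points lie on the same boundary sphere, and that case needs separate care. It may be necessary to shrink $M$ to a tubular neighbourhood in which $|y|$ and $|z|$ are both near $1$. The choices $\varepsilon\ll c\eta$ and $\eta\ll\delta$ fix the constants. I expect verifying this boundary nonvanishing, especially the radial part and the case where $\nu$ cannot be chosen, to be the delicate step; if it fails, I would instead work on $M=S^{n-1}\times\{|w|\le\eta\}\times[-\delta,\delta]$ directly.
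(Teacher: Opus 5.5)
\textbf{There is a genuine gap: your configuration space has one dimension too many, and the patch breaks hypotheses (1) and (3) of Theorem \ref{thmhomotopy}.}

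Because $\dim M=2n$, you add $\rho=|y|-|z|$. But $\rho$ vanishes identically on the antidiagonal $\{z=-y\}$. For $f$ an even local diffeomorphism and $\eta$ small, that antidiagonal is exactly the zero set of $f(y)-f(z)$ in $M$. So $\rho$ does not ``handle the radial direction.'' With $\ell=(y_2-z_2,\dots,y_n-z_n)$, the zeros of $F_0$ are the two arcs $\{(se_1,-se_1): 1-\delta\le |s|\le 1+\delta\}$. They are not isolated, and their endpoints lie on $\partial M$. (Also, $\ell$ cannot be a function of $y+z$: that quantity is $\tau$-invariant, so any function of it is even, not odd.)

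This is not a matter of tuning constants. The radial faces of $\partial M$, where $y$ or $z$ lies in $\partial A$, contain the exactly antipodal pairs $(y,-y)$ with $|y|=1\pm\delta$, and $f$ does not separate these. There $g(y)-g(-y)$ is only $O(\ve)$ and $h(y)-h(-y)$ is arbitrary. So nothing prevents $F_t$ from vanishing there for some intermediate $t$. No single extra odd real coordinate can rescue this, since an odd function on the sphere $\{(y,-y):|y|=1+\delta\}$ must vanish somewhere. Shrinking $M$ radially does not help. Your fallback $S^{n-1}\times\{|w|\le\eta\}\times[-\delta,\delta]$ keeps exactly these bad faces at $\pm\delta$.

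\textbf{The missing idea.} Use a $(2n-1)$-dimensional family of pairs with no radial freedom, so that its only boundary consists of pairs that $f$ separates by a definite amount. The paper takes $M=S^{n-1}\times\overline{B}_\delta$ with pairs $(x+w,-x+w)$, where $x$ lies on the unit sphere. No $\rho$ is needed, because $\mathbb{R}^{n-1}\times\mathbb{R}^n$ already matches $\dim M$. It sets $F_0=(x_1,\dots,x_{n-1},f(x+w)-f(-x+w))$, whose only zeros are $(\pm e_n,0)$. These are nondegenerate because $f(x+w)-f(x-w)=2Df|_xw+O(|w|^2)$. The only boundary is $|w|=\delta$, where $|f(x+w)-f(-x+w)|\ge c$, and this gives (3).

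\textbf{Second gap: your construction of $f$ fails for odd $n$}, which includes the relevant cases $n=3,7$. An even local diffeomorphism near $S^{n-1}$ would descend to an immersion of the non-orientable manifold $\RP^{n-1}\times[-\delta,\delta]$ into $\mathbb{R}^n$, which is impossible. Equivalently, the normal line bundle of $i$ is nontrivial, so no even transverse $\nu$ exists. Taking $\nu=0$ destroys the estimate $|f(y)-f(z)|\ge c|y+z|$ you rely on: pairs $z=-\lambda y$ collide under $f$. The paper instead uses the odd normal field ($\nu(-x)=-\nu(x)$ for $n$ odd, Lemma \ref{nux}). It compensates with the twisted involution $\tau(x,w)=(-x,R_xw)$, pairing $x+w$ with $-x+R_xw$. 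Then $Df|_{-x}\circ R_x=-Df|_x$ restores the nondegenerate expansion $2Df|_xw+O(|w|^2)$.
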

\begin{proof}
Let
$$\pi: S^{n-1} \rightarrow \mathbb{RP}^{n-1} = S^{n-1}/\sim,$$
be the quotient map $\pi(x)=[x]$, where $x \sim y$ for $x,y \in S^{n-1}$ if $x$ and $y$ are equal or are antipodal points (namely, $x=-y$).   Define
$$b= i \circ \pi : S^{n-1} \rightarrow \mathbb{R}^n,$$
which is again a smooth immersion, since $\pi$ is  a local diffeomorphism.  Observe that $b(-x)=b(x)$.

Next we will define a smooth map $\nu: S^{n-1} \rightarrow S^{n-1}$ which takes a point $x \in S^{n-1}$ and returns $\nu(x)$, a unit normal vector to $b_*(T_xS^{n-1})$ at the point $b(x)$.  

We first set up some notation.  We define the \emph{orientation} of a basis of vectors $v_1, \ldots, v_n$ of $\mathbb{R}^n$ to be
$$[v_1, \ldots, v_n] = \left\{ \begin{array}{ll} 1, \quad & \det (v_1, \ldots, v_n) >0 \\ -1, \quad & \det (v_1, \ldots, v_n)<0, \end{array} \right.$$
where $(v_1, \ldots, v_n)$ is the $n\times n$ matrix whose column vectors are $v_i$.  Note that replacing one of the $v_i$ by $-v_i$ changes the sign of the orientation.  If $e_1, \ldots, e_n$ are  the standard basis vectors of $\mathbb{R}^n$ then $[e_1, \ldots, e_n]=1$.

Let $v_1, \ldots, v_{n-1}$ be basis vectors of $T_xS^{n-1}$ such that   $[v_1, \ldots, v_{n-1}, x]=1$.  Note that $x$ is normal to $T_xS^{n-1}$ so $v_1, \ldots, v_{n-1}, x$ gives a basis of $\mathbb{R}^n$. 
Define $\nu(x)$  to be the unique unit normal vector to $b_*(T_{x}S^{n-1})$ in $\mathbb{R}^n$ so that $$[Db|_x(v_1), \ldots, Db|_x(v_{n-1}), \nu(x)]=1.$$
Note that since $b$ is an immersion,  $b_*(T_{x}S^{n-1})$ is a hyperplane in $\mathbb{R}^n$ spanned by  the vectors $Db|_x(v_1), \ldots, Db|_x(v_{n-1})$.
 There are exactly two unit normal vectors to $b_*(T_{x}S^{n-1})$; the orientation determines which one we pick.  The definition of $\nu$ is independent of the particular choice of $v_1, \ldots, v_{n-1}$; it only requires that $[v_1, \ldots, v_{n-1}, x]=1$.

We need an elementary lemma.

\begin{lemma} \label{nux}
For all $x \in S^{n-1}$ we have $\nu(-x)=(-1)^n \nu(x)$. 
\end{lemma}
\begin{proof}[Proof of Lemma \ref{nux}]
Fix $x\in S^{n-1}$ and let $v_1,\ldots,v_{n-1}$ be a basis of  $T_x S^{n-1}$ with $[v_1,\ldots,v_{n-1}, x]=1$.  By definition of $\nu(x)$, we have
 $$[Db|_x (v_1),\ldots, Db|_x (v_{n-1}), \nu(x)] =1.$$
On the other hand, $v_1, \ldots, v_{n-2}, -v_{n-1}$ is a basis of $T_{-x}S^{n-1}$ with
$$[v_1, \ldots, v_{n-2}, -v_{n-1}, -x ]=1.$$
  Since $b(-x)=b(x)$ we have $Db|_{-x} = -Db|x$.  Hence
\[
\begin{split}
\lefteqn{ [Db|_{-x}(v_1),\ldots, Db|_{-x}(v_{n-2}), Db|_{-x}(-v_{n-1}), (-1)^n \nu(x)] } \\= {} & (-1)^{n-2+n}   [Db|_x (v_1),\ldots, Db|_x (v_{n-1}), \nu(x)] \\
= {} & 1,
\end{split}
\]
and $\nu(-x) = (-1)^n\nu(x)$ as required.
\end{proof}

Let $\delta>0$ be a small constant to be determined later.    We  now define $f: A \rightarrow \mathbb{R}^n$ by
\begin{equation} \label{f1tx}
f((1+t)x) = b(x) + t \nu(x), \quad x \in S^{n-1}, \ -\delta \le t \le \delta.
\end{equation}
We claim that the derivative $Df|_x$ of $f$ at every $x \in S^{n-1}$ is invertible.  Indeed, without loss of generality, assume that $x=e_n$, the North pole of $S^{n-1}$ and use coordinates $x_1, \ldots, x_{n-1}$ in a neighborhood $U \subset S^{n-1}$ of $x$.  Then $e_1 = \partial_{x_1}, \ldots, e_{n-1} = \partial_{x_{n-1}}$ span $T_xS^{n-1}$.  Compute 
\begin{equation} \label{Df1}
\begin{split}
Df|_x (e_i) = {} & Db|_x (e_i), \quad i=1, \ldots, n-1,  \\
Df|_x (e_n) ={} & \nu(e_n).
\end{split}
\end{equation}
Since $b$ is an immersion, the vectors $Db|_x(e_1), \ldots, Db|_x(e_{n-1})$ span the hyperplane $b_* T_xS^{n-1} \subset \mathbb{R}^n$ and by definition $\nu(e_n)$ is orthogonal to the hyperplane.  Hence $Df|_x$ is invertible, completing the claim.  Assuming that $\delta>0$ is sufficiently small, it follows that $f$ is a local diffeomorphism.

Assume first that $n$ is even.  Then
\begin{equation} \label{feven}
 f(-y) = f(y), \quad \text{for all } y \in A,
 \end{equation}
since $b(-x)=b(x)$ and $\nu(-x)=\nu(x)$.

Define a compact $2n-1$-manifold $M$ by $M = S^{n-1} \times \ov{B}_{\delta}$ where $B_{\delta}$ is the open ball of radius $\delta$ centered at the origin in $\mathbb{R}^n$.  The manifold $M$ has a boundary $S^{n-1} \times \partial B_{\delta}$.    Define a smooth free involution $\tau: M \rightarrow M$ given by 
$$\tau(x, w) = (-x, w).$$  
Observe that if $(x, w) \in M$ then $x+w$ and $-x+w$ lie in the set $A$.

Define a map $F_0: M \rightarrow \mathbb{R}^{2n-1}$ by
$$F_0(x, w) = (  x_1, \ldots, x_{n-1}, f(x+w) - f(-x+w)),$$
where $x_1, \ldots, x_{n-1}$ are the first $n-1$ components of $x=(x_1, \ldots, x_n)$.

We first show that $F_0$ satisfies assumption (1) of Theorem \ref{thmhomotopy}.  First note that for fixed $x$ and small $w$, using (\ref{feven}) and Taylor's Theorem,
\begin{equation} \label{ff}
f(x+w)- f(-x+w) =  f(x+w) - f(x-w) = 2Df|_x (w) + O(|w|^2).
\end{equation}
If $F_0(x,w)=0$ then $x_1=\cdots = x_{n-1}=0$ which implies that $x = e_n$ or $-e_n$.  The points $(e_n, 0)$ and $(-e_n, 0)$ are zeros of $F_0$ in $M$.   Moreover, the derivative of $F_0$ at $(\pm e_n,0)$, using $x_1, \ldots, x_{n-1}$ as local coordinates on $S^{n-1}$ near $\pm e_n$,  is 
\[
(DF_0)_{(\pm e_n, 0)} = \begin{pmatrix}
I_{n-1}  & 0 \\
0 & 2Df_{\pm e_n}
\end{pmatrix},
\]
which is invertible since $Df_{\pm e_n}$ is invertible.  This implies that there exists an open neighborhood $U$ of $(e_n, 0)$  such that $F_0$ is non-zero on $U \setminus \{ (e_n, 0) \}$, and similarly for $(-e_n, 0)$.  Hence, shrinking $\delta>0$ if necessary, 
 $F_0$ has exactly two zeros $(e_n, 0)$ and $\tau(e_n,0)$ and the derivative of $F_0$ at these zeros is invertible.

Next, define $F_1: M \rightarrow \mathbb{R}^{2n-1}$ by
$$F_1(x, w) = ( h(x+w) - h(-x+w), g(x+w) - g(-x+w))$$
and $F: M \times [0,1] \rightarrow \mathbb{R}^{2n-1}$ by $F(x,t) = F_t(x) := (1-t)F_0(x) + tF_1(x)$.  We have
$$F_t( \tau (x,w)) = F_t( -x, w) = - F_t(x,w),$$
so (2) is satisfied.

Finally, for (3), write $\pi_2: \mathbb{R}^{n-1}\times \mathbb{R}^n \rightarrow \mathbb{R}^n$ for the projection $\pi_2(x,y)=y$ onto the second factor.
There exists a constant $c>0$ such that 
$$|\pi_2 \circ F_0(x,w)| = |f(x+w)- f(-x+w)| \ge c,$$
for $(x,w) \in \partial M$, since we already showed that $f(x+w)=f(-x+w)$ only when $w=0$, and $F_0$ is continuous on the compact boundary $\partial M$.  Since $f$ and $g$ are uniformly close on $A$ we may choose $\ve>0$ sufficiently small, depending on $\delta$, such that for any $(x,w) \in M$,
$$|\pi_2 \circ F_1(x,w) - \pi_2 \circ F_0(x,w)| = | f(x+w) - f(-x+w) - g(x+w) + g(-x+w)| < c/2.$$
Since
\[
\begin{split}
\pi_2 \circ F_t(x,w) =  {} & (1-t) \pi_2 \circ F_0(x,w) + t \pi_2 \circ F_1(x,w) \\
 =  {} & \pi_2 \circ F_0(x,w) + t(\pi_2 \circ F_1(x,w) - \pi_2 \circ F_0(x,w)),
 \end{split}
 \]
we have for $(x,w) \in \partial M$,
$$|F_t(x,w)| \ge | \pi_2 \circ F_t(x,w)| \ge c/2,$$
and this proves (3).

By Theorem \ref{thmhomotopy}, $F_1$ has a zero, namely a point $(x, w)$ in $M$ such that
$$g(x+w) = g(-x+w), \quad h(x+w)=h(-x+w).$$
Hence $G(x+w)=G(-x+w)$ with $x+w$ and $-x+w$ being distinct points in $A$.  This completes the proof in the case of $n$ even.

Now assume $n$ is odd.  Rather than repeat the entire proof, we merely indicate the necessary changes.  Crucially, we define for $x \in S^{n-1}$ and $w \in \mathbb{R}^n$ the map
$R_x: \mathbb{R}^n \rightarrow \mathbb{R}^n$ by
$$R_x(w) = w - 2 (x\cdot w) x.$$
This satisfies $R_x w=w$ if $w$ is perpendicular to $x$, and $R_x x = -x$, so that the component of $w$ which is in the $x$ direction gets reflected. Observe that $R_{-x}=R_x$ and $R_x^2 =\text{Id}$.
We define the involution $\tau: M \rightarrow M$ by
$$\tau(x,w) = (-x, R_x w),$$
noting that $\tau^2 (x,w) = (x, R_{-x} R_x w)=(x,w)$ as required.

If $(x, w) \in M$ then $x+w$ and $-x+R_x w$ lie in the set $A$.  Define a map $F_0: M \rightarrow \mathbb{R}^{2n-1}$ by
$$F_0(x, w) = (  x_1, \ldots, x_{n-1}, f(x+w) - f(-x+R_xw)),$$
where $x_1, \ldots, x_{n-1}$ are the first $n-1$ components of $x=(x_1, \ldots, x_n)$.

Taking the Taylor expansion in $w$ at a fixed $x \in S^{n-1}$ gives
\begin{equation} \label{fmf2}
f(x+w) - f(-x+R_x w) = Df|_x w - (Df|_{-x} \circ R_x)(w)+ O(|w|^2).
\end{equation}
We claim that for all $x \in S^{n-1}$,
\begin{equation} \label{Dfmx}
Df|_{-x} \circ R_x = - Df|_x.
\end{equation}
Indeed, assume without loss of generality that $x=e_n$ and use coordinates $x_1, \ldots, x_{n-1}$ in neighborhoods $U^+, U^- \subset S^{n-1}$ of $x$ and $-x$ respectively.  Then $e_1 = \partial_{x_1}, \ldots, e_{n-1} = \partial_{x_{n-1}}$ span $T_xS^{n-1}$ and $T_{-x}S^{n-1}$.  Recalling that $b(x)=b(-x)$ and $\nu(-x)=-\nu(x)$, we compute from (\ref{f1tx}) and (\ref{Df1}),
\[
\begin{split}
Df|_{-x} (e_i) = {} & Db|_{-x} (e_i)= - Db|_x (e_i) = - Df|_x (e_i), \quad i=1, \ldots, n-1,  \\
Df|_{-x} (e_n) ={} & -\nu(-e_n) = \nu(e_n) = Df|_x (e_n).
\end{split}
\]
On the other hand,  for $x=e_n$, the map $R_x : \mathbb{R}^n \rightarrow \mathbb{R}^n$ is given by  $R_x(e_i)=e_i$  for $i=1, \ldots, n-1$ and $R_x(e_n)=-e_n$. Then
\[ \begin{split}
Df|_{-x} \circ R_x (e_i) = {} & Df|_{-x}(e_i) = - Df|_x(e_i), \quad i=1, \ldots, n-1, \\
Df|_{-x} \circ R_x (e_n) ={} & Df|_{-x} (-e_n) = -Df|_x (e_n),
\end{split} \]
completing the proof of (\ref{Dfmx}).

From (\ref{fmf2}) we have
$$f(x+w) - f(-x+R_x w) = 2Df|_x w + O(|w|^2).$$
Then $F_0$ satisfies (1) by the same proof as in the case of $n$ even.
 
We define $F_t$ in the analogous way using $g$ and $h$.  It is straightforward to check (2) and (3) and so we can again apply Theorem \ref{thmhomotopy} to 
obtain two points $x+w$ and $-x+R_xw$ which are distinct because $w$ is small, with $G(x+w)= G(-x+R_xw)$.
\end{proof}

\section{A simpler proof of Theorem \ref{thm-key} in the case of $n=2$} \label{sectionn2}

When $n=2$ there is a more elementary and explicit proof of Theorem \ref{thm-key}, the case already treated by Hwang \cite{Hwang2025}.  For the reader's convenience, we write down the self-contained proof which is similar in spirit to \cite{Hwang2025}.  Rather than define our function $f$ using an  immersion of $\mathbb{RP}^1=S^1$ into $\mathbb{R}^2$ (of which there are many), we simply write down an explicit function $f$.

Let $f: \mathbb{R}^2 \rightarrow \mathbb{R}^2$ be the map $f(x_1,x_2)=(x_1^2-x_2^2, 2x_1x_2)$, namely the map $z \mapsto z^2$ in complex coordinates.   On the unit circle $\{ (\cos \theta, \sin \theta) \}_{\theta \in [0,2\pi]}$, the map $f$ sends the point with angle $\theta$ to the point with angle $2\theta$,
 and antipodal points $x$ and $-x$  have the same image under $f$.  
 
Define the compact $3$-manifold $M = S^1 \times \ov{B}_{\delta}$, where $B_{\delta}$ is the open ball of radius $\delta$ centered at the origin in $\mathbb{R}^2$.  The boundary of $M$ is $S^1 \times \partial \ov{B}_{\delta}$.  Identify $S^1$ with $\mathbb{R}/2\pi \mathbb{Z}$ via  $$\mathbb{R}/2\pi \mathbb{Z}\ni [\theta] \mapsto x_{\theta}:= (\cos \theta, \sin \theta) \in S^1.$$   There is a smooth free involution of $M$ given by $\tau ([\theta], w) = ([\theta+\pi], w)$.  Observe that if $([\theta], w) \in M$ then $x_{\theta} + w$ and $-x_{\theta}+w$ are in the annulus $A$.
 Define a map
$F_0: M \rightarrow \mathbb{R}^3$ by
$$F_0([\theta], w) = ( f(x_{\theta} + w) - f(-x_{\theta} + w), \cos \theta).$$
More explicitly, writing $w=(w_1,w_2)$, we obtain
\[
F_0([\theta],w)
=
\bigl(
4w_1\cos\theta-4w_2\sin\theta,\;
4w_1\sin\theta+4w_2\cos\theta,\;
\cos \theta
\bigr).
\]
 Observe that $F_0$ has exactly two zeros, at $([\pi/2],0)$ and $([3\pi/2],0)$.  Let's compute the derivative of $F_0$ at $([\pi/2],0)$.  
With columns ordered as $(\theta,w_1,w_2)$, its derivative is
\[
DF_0|_{([\theta],w)} 
=
\begin{pmatrix}
-4w_1\sin\theta-4w_2\cos\theta & 4\cos\theta & -4\sin\theta\\
4w_1\cos\theta-4w_2\sin\theta & 4\sin\theta & 4\cos\theta\\
-\sin\theta & 0 & 0
\end{pmatrix}.
\]
In particular,
\[
DF_0|_{([\pi/2],0)}
=
\begin{pmatrix}
0 & 0 & -4\\
0 & 4 & 0\\
-1 & 0 & 0
\end{pmatrix},
\]
which is invertible.  This shows that $F_0$ satisfies assumption (1) of Theorem \ref{thmhomotopy}.

Next, define $F_1: M \rightarrow \mathbb{R}^3$ by
$$F_1([\theta], w) = ( g(x_{\theta} + w) - g(-x_{\theta} + w), h(x_{\theta} + w ) - h(-x_{\theta} + w)),$$
and $F: M \times [0,1] \rightarrow \mathbb{R}^3$ by $F(x,t) = F_t(x):= (1-t) F_0(x) + t F_1(x)$. Observe that
$$F_t(\tau ([\theta], w)) = F_t( [\theta+\pi],w ) =  -F_t([\theta], w).$$
Hence (2) is satisfied.

Finally we check (3).  Let  $([\theta], w)$ be in the boundary $\partial M$.  We have $|w|=\delta$.  Then 
$$|F_0([\theta], w)|^2 \ge  (4w_1 \cos \theta - 4w_2 \sin \theta)^2 + (4w_1 \sin \theta+ 4 w_2 \cos \theta)^2 = 16\delta^2.$$
Since $g$ is close to $f$ on $A$ we may assume, if $\ve>0$ is small enough, that $F_t$ does not vanish on the boundary of $M$ for $t \in [0,1]$.

Then by Theorem \ref{thmhomotopy}, $F_1$ has a zero, namely a point $([\theta], w)$ in $M$ such that
$$g(x_{\theta}+w) = g(-x_{\theta}+w), \quad h(x_{\theta}+w)=h(-x_{\theta}+w).$$
 Hence $G(x_{\theta}+w)=G(-x_{\theta}+w)$ and  $x_{\theta}+w, -x_{\theta}+w$ are distinct points in $A$, completing the proof of Theorem \ref{thm-key} in the case $n=2$.

\section{Proof of Theorem \ref{width-main}}\label{sec-thm-proof}

Besides the key ingredient in \S\ref{sec-key}, we need the following lemma on the networks, which shows that we can approximate neural networks by injective maps composed with projections  (cf. \cite[Section 5]{Johnson2018}).

For integers $N\geq k\geq 1$, write
$$
\pi_{N,k}:\mathbb R^N\to\mathbb R^k,\quad
\pi_{N,k}(x_1,\ldots,x_N):=(x_1,\ldots,x_k).
$$

\begin{lem}\label{width-lift}
Assume the activation function $\sigma$ is continuous and monotone.  Fix  $N\geq\max\{n,m\}$.  Suppose that
$F:\mathbb R^n\to\mathbb R^m$ is a network of width at most
$N$. For every nonempty compact $K\subset\mathbb R^n$ and every
$\varepsilon>0$, there is a smooth injective map
$G:\mathbb R^n\to\mathbb R^N$ such that
$$
\sup_K |\pi_{N,m}\circ G-F| < \varepsilon.
$$
\end{lem}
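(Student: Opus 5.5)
The plan is to pad every layer of $F$ to dimension $N$, so that $F$ becomes a projection of a composition of self-maps of $\mathbb R^N$. Then I will perturb each self-map to a smooth injective map, uniformly close on the relevant compact sets.

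\textbf{Padding.} Write $F=A^{(L)}\circ\sigma\circ\cdots\circ\sigma\circ A^{(1)}$ with dimensions $n=d_0,\dots,d_L=m$, all at most $N$. Let $\iota:\mathbb R^n\to\mathbb R^N$ be the inclusion $x\mapsto(x,0)$, which uses $N\ge n$. For each $\ell$ define an affine map $\widehat A^{(\ell)}:\mathbb R^N\to\mathbb R^N$ as follows: it reads the first $d_{\ell-1}$ coordinates, applies $A^{(\ell)}$, puts the result in the first $d_\ell$ coordinates, and sets the remaining coordinates to $0$. The padded coordinates may become $\sigma(0)$ after an activation, but the next $\widehat A^{(\ell+1)}$ ignores them. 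Hence
$$F=\pi_{N,m}\circ\widehat A^{(L)}\circ\sigma\circ\widehat A^{(L-1)}\circ\cdots\circ\sigma\circ\widehat A^{(1)}\circ\iota,$$
where the identity at the last step uses $N\ge m$.

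\textbf{Injective perturbations of each layer.} Each $\widehat A^{(\ell)}(y)=My+c$ is approximated by the invertible affine map $(M+\eta I)y+c$, where $\eta>0$ is small and chosen so that $\det(M+\eta I)\neq 0$. This is possible for all but finitely many $\eta$, and the approximation is uniform on compact sets.

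For the activation, suppose first that $\sigma$ is non-decreasing. Mollify it to get a smooth non-decreasing $\sigma_\eta=\sigma*\rho_\eta$, and set $\tilde\sigma_\eta(x)=\sigma_\eta(x)+\eta x$. This function is smooth with $\tilde\sigma_\eta'\ge\eta>0$, so it is strictly increasing. Because $\sigma$ is continuous, $\tilde\sigma_\eta\to\sigma$ uniformly on compact subsets of $\mathbb R$. If $\sigma$ is non-increasing, use $\sigma_\eta(x)-\eta x$ instead. Applied coordinatewise, $\tilde\sigma_\eta$ is a smooth injective map $\mathbb R^N\to\mathbb R^N$.

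Now set $G=\widetilde A^{(L)}\circ\tilde\sigma\circ\cdots\circ\tilde\sigma\circ\widetilde A^{(1)}\circ\iota$. This is a composition of smooth injective maps, so it is smooth and injective.

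\textbf{Uniform closeness (the only real step).} Let $\Phi_\ell$ and $\widetilde\Phi_\ell$ be the partial compositions of the first $\ell$ padded maps and of the first $\ell$ perturbed maps, respectively, both precomposed with $\iota$. I will show by induction on $\ell$ that $\sup_K|\widetilde\Phi_\ell-\Phi_\ell|$ can be made smaller than any prescribed $\varepsilon_\ell>0$.

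For the inductive step, let $\psi$ be the next exact layer and $\tilde\psi$ its perturbation. Put $K_\ell=\Phi_\ell(K)$, which is compact, and let $K_\ell'$ be its closed $1$-neighbourhood. Then
$$|\tilde\psi(\widetilde\Phi_\ell)-\psi(\Phi_\ell)|\le|\tilde\psi-\psi|(\widetilde\Phi_\ell)+|\psi(\widetilde\Phi_\ell)-\psi(\Phi_\ell)|.$$
Once $\varepsilon_\ell<1$, the point $\widetilde\Phi_\ell(x)$ lies in $K_\ell'$. So the first term is at most $\sup_{K_\ell'}|\tilde\psi-\psi|$, which is small by the choice of perturbation parameter. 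The second term is small by the uniform continuity of $\psi$ on $K_\ell'$, provided $\varepsilon_\ell$ is chosen small enough.

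To make the choices consistent, pick the tolerances backwards from the last layer, starting with $\varepsilon_L=\varepsilon$. Then pick the perturbation parameters forwards to meet them. Finally, $\pi_{N,m}$ is $1$-Lipschitz, so $\sup_K|\pi_{N,m}\circ G-F|\le\sup_K|\widetilde\Phi_L-\Phi_L|<\varepsilon$.

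I expect no serious obstacle beyond this bookkeeping. The one point needing care is the strict monotonicity of the smoothed activation, and that is exactly where the monotonicity hypothesis on $\sigma$ is used.
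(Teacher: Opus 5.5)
Your proof is correct and follows essentially the paper's route: pad every layer to $\mathbb{R}^N$, and replace each padded affine map $y\mapsto My+c$ by $y\mapsto (M+tI)y+c$, with $t$ avoiding the finitely many roots of $\det(M+tI)$. You then replace $\sigma$ by its mollification plus $\pm t s$ to obtain a smooth strictly monotone activation. The only difference is bookkeeping: the paper uses a single parameter $t$ for all layers and gets the uniform estimate on $K$ from the continuity of $(t,x)\mapsto G_t(x)$ on the compact set $[0,1]\times K$, which replaces your layer-by-layer induction with backward-chosen tolerances.
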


\begin{proof}
We first approximate $\sigma$ by smooth strictly monotone functions.
Choose $\eta\in\{1,-1\}$ such that $\eta\sigma$ is non-decreasing.
Choose a standard nonnegative mollifier
$\varphi\in C_c^\infty(-1,1)$ with
$\int_{\mathbb R}\varphi(r)\,dr=1$. For $t\ge 0$, define
$$
\sigma_t(s)
:=\int_{-1}^1\sigma(s-tr)\varphi(r)\,dr+\eta ts,\quad s\in \mathbb{R}. 
$$
Clearly, $\sigma_0=\sigma$.
By standard properties of mollification, $\sigma_t$ is smooth
for every $t>0$ and
$\sigma_t\to\sigma$ locally uniformly as $t\to 0^+$.
For $t>0$,
we verify that $\sigma_t$ is strictly monotone and thus injective on $\mathbb{R}$. Indeed,
for $s_1<s_2$, monotonicity of $\sigma$ and $\varphi\geq 0$ give
\begin{align*}
\eta\sigma_t(s_2)-\eta\sigma_t(s_1)
&=\int_{-1}^1
\bigl(\eta\sigma(s_2-tr)-\eta\sigma(s_1-tr)\bigr)
\varphi(r)\,dr+t(s_2-s_1)\\
&\geq t(s_2-s_1)>0.
\end{align*}

We again use $\sigma_t$ to denote its
higher dimensional extensions in the same way as for $\sigma$. Then $\sigma_t$ as a map from $\mathbb R^N$ to $\mathbb{R}^N$ is smooth and
injective for every $t>0$.

Next we put all the affine maps of the network on the same space
$\mathbb R^N$. Let $F$ be expressed by affine maps
$A^{(1)},\ldots,A^{(L)}$ and dimensions
$d_0=n$, $d_L=m$, with $d_{\ell} \leq N$ for $\ell=0, \ldots, L$. For $t\ge 0$, define $\overline A_t^{(\ell)}: \mathbb{R}^N \rightarrow \mathbb{R}^N$ by
$$
\overline A_t^{(\ell)}(x)
:=\bigl(A^{(\ell)}(x_1,\ldots,x_{d_{\ell-1}}),
{0,\ldots,0}\bigr) + tx, 
\quad 1\leq \ell\leq L,
$$
for $x=(x_1,\dots,x_N)$.
Here there are $(N-d_{\ell})$ zeros in the above definition;
when $d_{\ell}=N$, there are no added zero coordinates. There is a finite set $E\subset
\mathbb R$ such that for $t\notin E$, the affine maps
$\overline A_t^{(\ell)}$ are isomorphisms.  This follows from the fact that if $M$ is any $N\times N$ matrix and $I_N$  the $N\times N$ identity matrix, $\det(M+tI_N)$ is a polynomial in $t$ of degree $N$ and hence vanishes for at most $N$ values of $t$.

Let
$J:\mathbb R^n\to\mathbb R^N$ be defined by $J(x):=(x,0,\ldots,0)$, 
and define
$$
G_t:=\overline A_t^{(L)}\circ\sigma_t\circ
\overline A_t^{(L-1)}\circ\cdots\circ\sigma_t\circ
\overline A_t^{(1)}\circ J.
$$
When $L=1$, this means $G_t=\overline A_t^{(1)}\circ J$.
For every $t>0$ with $t\notin E$, all factors are smooth and
injective, including $J$. Their composition $G_t$ is consequently
smooth and injective on $\mathbb R^n$.
Moreover, 
$$
G_0(x)=(F(x),0,\ldots,0).
$$

Since $(t,x)\mapsto G_t(x)$ is continuous on the compact set
$[0,1]\times K$, we obtain, for all sufficiently small $t>0$,
$$
\sup_K |G_t-G_0|<\varepsilon.
$$
Choose such a positive $t$ outside the finite set $E$, and let
$G:=G_t$. Then, by $\pi_{N,m}\circ G_0=F$, we have
$$
\sup_K |\pi_{N,m}\circ G-F|
\leq\sup_K |G_t-G_0|<\varepsilon,
$$
which proves the lemma.
\end{proof}

Now we are ready to present the following:

\begin{proof}[Proof of Theorem~\ref{width-main}]
Let $\varepsilon, \delta$, $A$, $f$ be those given by Theorem \ref{thm-key}.
We may assume $\delta<1/4$.
Note that the same defining formula \eqref{f1tx} extends $f$ to a smooth map on $\mathbb{R}^n\setminus \{0\}$. 
Choose a standard cutoff $\chi\in C_c^\infty(\mathbb R)$ supported on $(1/2,3/2)$ such that $0\leq\chi\leq 1$ on $\mathbb R$ and
$\chi=1$ on $[3/4,5/4]$. Define
$$
\overline f(y):=\chi(|y|) f(y) \quad \text{for }y\in \mathbb{R}^n\setminus\{0\},\quad \text{and}\quad 
\overline{f}(0)=0.
$$
Then $\overline f\in C_c^\infty(\mathbb R^n;\mathbb R^n)$ agrees with $f$ on an open neighborhood of $A$.

Let
$$
 \varepsilon_0:=\varepsilon/2,\quad 
d:=\min\{n,m\},\quad \text{and}\quad  N:=n+d-1.
$$
Write $f=(f_1,\ldots,f_n)$ on $A$ and
$\overline f=(\overline f_1,\ldots,\overline f_n)$ on
$\mathbb R^n$.
Define
$$
f_*:=(\overline f_1,\ldots,\overline f_d,
{0,\ldots,0})
\in C_c^\infty(\mathbb R^n;\mathbb R^m).
$$
Here there are $(m-d)$ zeros in the definition of $f_*$.

Fix $\sigma$ to be any continuous and monotone activation. Suppose the contrary that a network $F:\mathbb R^n\to
\mathbb R^m$ of width at most $N$ satisfies
$$
\sup_A |f_*-F|<\varepsilon_0.
$$
By the definition of the width, we must have $N\geq\max\{n,m\}$.  We may therefore apply Lemma~\ref{width-lift}, with $K=A$,
to obtain a smooth injective map $G:\mathbb R^n\to\mathbb R^N$ such
that
$$
\sup_A |\pi_{N,m}\circ G-F|<\varepsilon_0.
$$

Write
$G=(g,h)$,
where $g:\mathbb R^n\to\mathbb R^d$ consists of the first $d$ coordinates and $h:\mathbb R^n\to\mathbb R^{n-1}$ consists of
the remaining $N-d=n-1$ coordinates. By $d\leq m$, the above inequalities yield
\[
\begin{split}
\sup_A |g-(f_1,\ldots,f_d)|
\le {} & \sup_A |\pi_{N,m} \circ G - f_*| \\
\le {}  & \sup_A |\pi_{N,m}\circ G-F|
       +\sup_A |f_* - F| \\
       < {} & 2\varepsilon_0=\varepsilon.
\end{split}
\] 
Here we used $\overline f_j=f_j$ on $A$.

If $d<n$, define
$\widehat g:=(g_1,\ldots,g_d,f_{d+1},\ldots,f_n):A\to\mathbb R^n$.
If $d=n$, simply let $\widehat g:=g|_A$. In either case,
$\widehat g$ is smooth on a neighborhood of $A$ and
$$
\sup_A|\widehat g-f|
=\sup_A |g-(f_1,\ldots,f_d)|<\varepsilon.
$$
Then Theorem \ref{thm-key} shows that the map
$(\widehat g,h):A\to\mathbb R^{2n-1}$
is not injective. Hence there are distinct points $y,z\in A$ with
$\widehat g(y)=\widehat g(z)$ and $h(y)=h(z)$.
The first $d$ coordinates of the first equation give
$g(y)=g(z)$. Together with the second equation, this implies
$$
G(y)=(g(y),h(y))=(g(z),h(z))=G(z),
$$
contradicting the injectivity of $G$.
The theorem is therefore proved.
\end{proof}

\section{Proof of Corollary \ref{cor-relu}}\label{sec-cor-proof}
The following theorem is well-known in differential topology.
\begin{theorem}\label{thm:projective-codimension-one}
For $n\ge 2$, the real projective space
$\mathbb{RP}^{n-1}$ admits a smooth immersion into $\mathbb{R}^n$
if and only if
$n \in \{ 2,3,4,7,8\}$.
\end{theorem}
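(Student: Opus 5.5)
The plan splits Theorem \ref{thm:projective-codimension-one} into an existence part (``if'') and a non-existence part (``only if''). Both parts go through the stable normal bundle of $\RP^{n-1}$ and its Stiefel--Whitney classes, together with the Hirsch--Smale reduction of immersion theory to bundle theory.

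For the ``if'' direction, I would write down immersions for $n\in\{2,3,4,7,8\}$ case by case.
\begin{itemize}
\item For $n=2$, $\RP^1\cong S^1$ embeds in $\mathbb R^2$.
\item For $n=3$, the Boy surface gives an immersion $\RP^2\to\mathbb R^3$.
\item For $n=4$ and $n=8$, $\RP^{n-1}$ is parallelizable. Indeed $\RP^3=SO(3)$ is a Lie group, and $\RP^7$ is parallelizable via octonion multiplication. A parallelizable closed $(n-1)$-manifold has stably trivial tangent bundle, so its normal bundle in codimension one can be taken trivial. Hirsch's theorem (an $(n-1)$-manifold immerses in $\mathbb R^n$ iff there is a line bundle $\nu$ with $T\oplus\nu$ trivial, valid for codimension $\ge1$ on closed manifolds) then gives the immersion.
\item For $n=7$, I would use that $\RP^6$ has $T\RP^6\oplus\epsilon^1\cong 7\gamma$. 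To find a line bundle $\nu$ with $T\RP^6\oplus\nu$ trivial, it suffices to take $\nu=\gamma$, since $8\gamma$ is trivial over $\RP^7$ (the Hopf/octonion case), hence over $\RP^6$. Hirsch's theorem again produces the immersion.
\end{itemize}

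For the ``only if'' direction, suppose $\RP^{n-1}$ immerses in $\mathbb R^n$, with normal line bundle $\nu$. Then $T\RP^{n-1}\oplus\nu$ is trivial. Writing $w(\nu)=1+a x$ with $a\in\{0,1\}$, where $x$ generates $H^1(\RP^{n-1};\mathbb Z/2)$, we get
$$(1+x)^n(1+ax)=1 \quad\text{in } \mathbb Z/2[x]/(x^n).$$
If $a=0$, then $(1+x)^n=1$ mod $x^n$, which forces $n$ to be a power of $2$. If $a=1$, then $(1+x)^{n+1}=1$ mod $x^n$, which forces $n+1$ to be a power of $2$ (the top binomial coefficient is killed by $x^n$). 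So $n$ or $n+1$ is a power of $2$.

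Stiefel--Whitney classes alone do not finish the argument: $n=16$ or $n=15$ survive this test. The main obstacle is therefore the finer obstruction, and I would use $KO$-theory and Adams' vector-field result.
\begin{itemize}
\item In case $a=0$ ($n=2^k$), $\nu$ is trivial, so $\RP^{n-1}$ is stably parallelizable. Since it is odd-dimensional with vanishing Euler characteristic, it is then parallelizable. Adams' theorem (parallelizability of $\RP^{n-1}$ only for $n=1,2,4,8$) gives $n\in\{2,4,8\}$.
\item In case $a=1$, $\nu=\gamma$ and $(n+1)\gamma$ is stably trivial over $\RP^{n-1}$. Adams' computation of $\widetilde{KO}(\RP^{n-1})$ as cyclic of order $2^{\phi(n-1)}$, generated by $\gamma-1$, requires $2^{\phi(n-1)}\mid n+1$. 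With $n+1=2^k$, the growth of $\phi$ (roughly half the dimension) allows only $n+1\in\{4,8\}$, i.e. $n\in\{3,7\}$. The case $n+1=2$ is excluded since $n\ge2$.
\end{itemize}
I would check this last numerical inequality by hand for small $k$ and use $\phi(n-1)\ge (n-2)/2$ for large $k$. Since the theorem is described as well known, in the write-up I would cite Hirsch and Adams for these two inputs rather than reprove them.
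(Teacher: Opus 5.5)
\textbf{The gap is in your case $a=0$.} From $\nu$ trivial you only get that $T\RP^{n-1}\oplus\epsilon^1$ is trivial. You then conclude parallelizability ``since it is odd-dimensional with vanishing Euler characteristic.'' That implication is false in general: $S^5$ is odd-dimensional and stably parallelizable, yet not parallelizable. For a closed $m$-manifold $M$ with $m$ odd, the obstruction to destabilizing $TM\oplus\epsilon^1\cong\epsilon^{m+1}$ lies in $\ker\bigl(\pi_{m-1}SO(m)\to\pi_{m-1}SO(m+1)\bigr)$. This group is $\mathbb{Z}/2$ when $m\notin\{1,3,7\}$, and the Euler class cannot see it, since it vanishes on every odd-dimensional manifold. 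Your situation is exactly the borderline one: $T\RP^{n-1}$ has rank equal to the dimension of the base, where stable triviality does not imply triviality. Compare your $n=7$ existence step. There $T\RP^6\oplus\gamma$ has rank $7>6$, so the implicit passage from ``stably trivial'' to ``trivial'' is legitimate.

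\textbf{Repairing it.} You can fix this with a tool you already use. Since $T\RP^{n-1}\oplus\epsilon^1\cong n\gamma$, stable parallelizability gives $n(\gamma-1)=0$ in $\widetilde{KO}(\RP^{n-1})\cong\mathbb{Z}/2^{\phi(n-1)}$. Hence $2^{\phi(n-1)}\mid n=2^k$, i.e.\ $\phi(2^k-1)\le k$. This holds for $k=1,2,3$ and fails for $k\ge 4$ (already $\phi(15)=7$), so $n\in\{2,4,8\}$, with no detour through parallelizability. Alternatively, follow the paper for even $n$. Compose with the double cover to get an immersion $S^{n-1}\to\mathbb{R}^n$ of even normal degree, and invoke Milnor's theorem to force $S^{n-1}$ to be parallelizable.

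\textbf{Comparison with the paper.} With this repair your argument is correct, and it is a genuinely different route. The paper simply cites Hirsch. For odd $n$, Hirsch's Theorem~7.2 reduces immersibility to parallelizability of $\RP^n$, which holds only for $n=3,7$. For even $n$, the paper uses Milnor's normal-degree result. Your split $a\in\{0,1\}$ is the same as the paper's parity split, because $w_1(\nu)=w_1(T\RP^{n-1})=nx$. What your route gains is a uniform Stiefel--Whitney reduction to ``$n$ or $n+1$ is a power of $2$,'' followed by one $KO$-theoretic computation. The cost is relying on Adams's computation of $\widetilde{KO}(\RP^m)$ rather than the earlier results on parallelizability of spheres and projective spaces.
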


We recall how this classification follows from the classical results
recorded in \cite{Hirsch1959}. The case $n=2$ is immediate, since
$\mathbb{RP}^{1}$ is diffeomorphic to the circle.

Suppose first that $n\geq 3$ is odd.
By \cite[Theorem~7.2]{Hirsch1959},
$\mathbb{RP}^{n-1}$ admits a smooth immersion into $\mathbb{R}^{n}$
if and only if $\mathbb{RP}^{n}$ is parallelizable (i.e. admits a global smooth frame).
According to results of Kervaire, Bott, and Milnor recalled in
\cite[footnote 3,
p. 272]{Hirsch1959}, $\RP^n$ is parallelizable precisely
when $n$ is $3$ or $7$.

Now suppose that $n\geq 4$ is even, and let $i:\mathbb{RP}^{n-1}\to \mathbb{R}^n$ be a smooth immersion. Write $\pi:S^{n-1}\to \RP^{n-1}$ as the antipodal quotient map.
As in the proof of \cite[Theorem~7.2]{Hirsch1959}, 
$i\circ \pi$ gives an immersion
$S^{n-1}\to\mathbb{R}^n$ of even normal degree, then
Milnor's result implies that $S^{n-1}$ is parallelizable.
Then the result recalled in
\cite[footnote 3, p. 272]{Hirsch1959} forces
$n=4$ or $n=8$.
Conversely, the required immersions for $n=4,8$ are recorded
in \cite[Theorem~7.1]{Hirsch1959}.
Combining these cases gives precisely $n\in\{2,3,4,7,8\}$.

\begin{proof}[Proof of Corollary \ref{cor-relu}]
The desired conclusion follows directly from Theorem \ref{width-main}, Theorem \ref{thm:projective-codimension-one}, and the upper bound \eqref{HS1}.
\end{proof}

\end{document}